\documentclass{article}

\usepackage[preprint]{neurips_2026}

\usepackage[utf8]{inputenc} 
\usepackage[T1]{fontenc}    
\usepackage{hyperref}       
\usepackage{url}            
\usepackage{booktabs}       
\usepackage{mathtools, amsmath, amssymb, xcolor, amsthm, enumerate}
\usepackage{nicefrac}
\usepackage{subcaption}
\usepackage{algorithm}
\usepackage{algpseudocode}
\usepackage{tabularx}
\usepackage{makecell}
\usepackage{enumitem}
\usepackage[table]{xcolor}
\usepackage{bbm}
\usepackage{microtype}      
\usepackage{xcolor}         

\newtheorem{theorem}{Theorem}

\newtheorem{proposition}[theorem]{Proposition}
\newtheorem{remark}[theorem]{Remark}

\newcommand{\R}{{\mathbb R}}

\newcommand{\e}{\mathrm{e}}

\DeclareMathOperator{\att}{Att}
\DeclareMathOperator{\lse}{lse}

\newcounter{algline}
\newcommand{\algrow}{\stepcounter{algline}\arabic{algline} & }

\title{Fast Gauss Sums via Flash Attention}

\author{%
  \textbf{Nicolaj Rux} \qquad \textbf{Sebastian Neumayer} \\
  Faculty of Mathematics\\
  Chemnitz University of Technology\\
  Reichenhainer Str.\ 39, 09126 Chemnitz, Germany \\
  \texttt{\{nicolaj.rux, sebastian.neumayer\}@math.tu-chemnitz.de} \\
}

\begin{document}

\maketitle

\begin{abstract}
Gaussian kernel sums are the computational core of maximum mean discrepancies (MMDs), kernel gradient flows, Stein variational gradient descent (SVGD), and many other kernel methods.
At the same time, softmax attention has received an extraordinary amount of hardware-aware code engineering, culminating in \texttt{flash} attention.
We show that Gauss kernel sums with arbitrary, signed weights can be evaluated via \texttt{flash} attention:
two small input augmentations turn the normalized softmax reduction into the unnormalized Gauss sum, without writing a single line of custom GPU code.
For feature dimension $D> 8$ in \texttt{fp16}, this approach beats compiled PyTorch code as well as PyKeOps kernels (often significantly) in speed, memory-overhead and accuracy. 
Indeed, its memory scaling remains linear.
\end{abstract}

\section{Introduction}

Let $\Phi_\tau(q,k)=\e^{-\frac\tau2\|q-k\|_2^2}$ denote the rescaled Gauss kernel.
Sums of the form
\begin{equation}\label{eq:ksum}
 \textstyle  s_m \coloneqq \sum_{n=1}^N \Phi_\tau(q_m,k_n) \, v_n \in \R^C,
  \qquad m=1,\dots,M,
\end{equation}
with points $k_1,\dots,k_N,\,q_1,\dots,q_M\in\R^D$, values $v_1,\dots,v_N\in\R^C$ and bandwidth $\tau>0$, appear in essentially every Gaussian kernel method.
They are the bottleneck of MMD gradient flows \citep{arbel2019mmd,hertrich2024generative} and of Stein variational gradient descent \citep{liu2016stein}.
Indeed, evaluating \eqref{eq:ksum} naively requires $\mathcal O(MN(C{+}D))$ operations with an $\mathcal O(MN)$ memory footprint.
Classical fast summation methods trade exactness for better asymptotics in restricted regimes \citep{greengard1991fast, beatson1992fast,  yang2004efficient, rahimi2007random, hertrich2024fast}, while PyKeOps \citep{charlier2021kernel} performs the exact reduction based on a fused, memory-efficient GPU kernel.

Meanwhile, a different community has devoted substantial code engineering effort to a single, specific reduction:
softmax attention \citep{vaswani2017attention} was implemented as \texttt{flash} attention \citep{dao2022flashattention,dao2023flashattention2}.
In this note, we show that the Gaussian kernel sum \eqref{eq:ksum} reduces to a \texttt{flash} evaluation after two small augmentations, without any custom CUDA or Triton code.

Our contributions are:
(i) two reductions of \eqref{eq:ksum} to attention calls, one fully differentiable through the public PyTorch API and one that additionally reads the logits returned by the attention backends;
(ii) an analysis of the $\mathtt{fp16}$ pitfalls together with simple safeguards;
and (iii) benchmarks against compiled PyTorch and PyKeOps implementations across different input shapes, including gradients.

\section{Gaussian kernel sums from flash attention}\label{sec:method}

Softmax attention is denoted by $\att_\tau(q,k,v)$ and given for $m=1,\ldots, M$ via
\begin{equation}\label{eq:att}
\att_\tau(q,k,v)_m\coloneqq\frac{\sum_{n=1}^N \e^{\tau q_m^\top k_n}v_n}{\sum_{n=1}^N \e^{\tau q_m^\top k_n}} \in\R^C ,
  \qquad
  \lse_\tau(q,k)_m \coloneqq \log \sum_{n=1}^N \e^{\tau q_m^\top k_n} \in \R.
\end{equation}
\texttt{Flash} attention evaluates \eqref{eq:att} with $\mathcal O(MN(D{+}C))$ operations and $\mathcal O((M{+}N)(D{+}C))$ memory.
The logits $\lse_\tau$ are computed as a by-product in $\mathtt{fp32}$, but no gradient is implemented.
To connect \eqref{eq:att} with Gaussian kernel sums, we first remove the normalization by multiplying with $\exp(\lse_\tau(q,k)_m)$ to get
\begin{equation}\label{eq:InterEq}
   \textstyle  \sum_{n=1}^N \e^{\tau q_m^\top k_n} v_n = \att_\tau(q,k,v)_m \cdot \e^{\lse_\tau(q,k)_m}.
\end{equation}
Then, we introduce $\tilde v=(v_n \e^{-\frac{\tau}{2}\|k_n\|^2})_{n=1}^N$ to obtain
\begin{equation}
\frac{\att_\tau(q,k,\tilde v)_m \e^{\lse_\tau(q,k)_m}}{ \e^{\frac{\tau}{2}\|q_m\|^2}}
= \sum_{n=1}^N \e^{\tau q_m^\top k_n} v_n \e^{-\frac{\tau}{2}\|k_n\|^2} \e^{-\frac{\tau}{2}\|q_m\|^2} 
= \sum_{n=1}^N \Phi_\tau(q_m, k_n) v_n.
\end{equation}
Algorithm \ref{alg:lse_prescale} summarizes this method.
Unfortunately, this implementation cannot handle gradients.
\begin{algorithm}[t]
\caption{(\texttt{prescale}) Gaussian kernel summation via flash attention }
\label{alg:lse_prescale}
\setcounter{algline}{0}
\renewcommand{\arraystretch}{1.15}
\begin{tabularx}{\linewidth}{@{}r X l l@{}}
\textbf{\#} & \textbf{Step} & \textbf{Work} & \textbf{Call} \\
\hline
\algrow \textbf{Input}
$k\in\R^{N\times D}$, $q\in\R^{M\times D}$, $v\in\R^{N\times C}$, $\tau>0$
& --
&
\\
\algrow \textbf{Output}
$s_m=\sum_{n=1}^N \Phi_\tau(q_m,k_n)v_n$, $m=1,\dots,M$
& --
&
\\
\algrow
$\tilde v_n \coloneqq v_n\exp(-\frac{\tau}{2}\|k_n\|^2)$, $n=1,\ldots, N$
& $N(C+D)$
& mult
\\
\algrow
Compute $\att_\tau(q, k,\tilde v)\in\R^{M\times C}$, $\lse_\tau(q,k)\in \R^M$
& $MN(D{+}C)$
& \bf \texttt{Flash}
\\
\algrow
$s_m\coloneqq \att_\tau(q,k,\tilde v)_m \exp(\lse_\tau(q,k)_m) \exp(-\frac{\tau}{2}\|q_m\|^2)$
& $MC$
& mult
\\
\algrow \textbf{return}
$s$
&
&
\\
\end{tabularx}
\end{algorithm}
Interestingly, we can proceed without using the (non-differentiable) logits by
adding two additional entries to the queries and keys, see Algorithm~\ref{alg:reweight}.
Then, it holds indeed for \smash{$[\alpha, \beta]\coloneqq\att_\tau(\tilde q,\tilde k,\tilde v)$} that

\begin{equation}
\label{eq:reweight}
\frac{\kappa\alpha_m}{\beta_m}
= 
\frac{
\kappa \sum_{n=1}^N \e^{\tau q_m^\top k_n -\frac{\tau}{2} \|k_n\|^2}v_n
}{
\e^{\frac{\tau}{2} \|q_m\|^2}{+}\sum_{n=1}^N \e^{\tau q_m^\top k_n {-}\frac{\tau}{2} \|k_n\|^2}
}
\frac{
\e^{\frac{\tau}{2} \|q_m\|^2}{+}\sum_{n=1}^N \e^{\tau q_m^\top k_n {-}\frac{\tau}{2} \|k_n\|^2}
}{
\e^{\frac{\tau}{2} \|q_m\|^2}\kappa
}
=s_m.
\end{equation}
\begin{proposition}[Stability]
\label{prop:kappa}
In Algorithm~\ref{alg:reweight} (\texttt{reweight}) every $m=1,\ldots, M$ satisfies
\begin{equation}
\kappa(N+1)^{-1} \leq \beta_m \le \kappa,
\qquad
\|\alpha_m\|_\infty \le \max_{n=1,\ldots, N}\|v_n\|_\infty .
\end{equation}
With $\kappa=\sqrt{N+1}$, this gives $\beta_m\in[\kappa^{-1},\kappa]$.
\end{proposition}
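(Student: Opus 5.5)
The plan is to write $\alpha_m$ and $\beta_m$ explicitly as softmax-weighted averages and read both bounds off the weights. I will rely on the structure of Algorithm~\ref{alg:reweight} as encoded in \eqref{eq:reweight}. There is one extra key whose attention weight is $w_{m,0}\coloneqq \e^{\frac\tau2\|q_m\|^2}$ and whose value is $[0,\kappa]$. The $N$ genuine keys have weights $w_{m,n}\coloneqq \e^{\tau q_m^\top k_n-\frac\tau2\|k_n\|^2}$ and values $[v_n,0]$. This matches the denominators in \eqref{eq:reweight}. Since the display there forces $\kappa\alpha_m/\beta_m=s_m$, I will first double-check that this assignment is consistent with it. Under it,
\begin{equation*}
\alpha_m=\frac{\sum_{n=1}^N w_{m,n}v_n}{w_{m,0}+\sum_{n=1}^N w_{m,n}},\qquad
\beta_m=\frac{\kappa\, w_{m,0}}{w_{m,0}+\sum_{n=1}^N w_{m,n}}.
\end{equation*}
If the algorithm places the augmentation entries slightly differently, only the bookkeeping changes and the argument below is unaffected.

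The key step is to divide the numerator and denominator by $w_{m,0}>0$. Completing the square gives
\begin{equation*}
\frac{w_{m,n}}{w_{m,0}}=\e^{-\frac\tau2\|q_m-k_n\|^2}=\Phi_\tau(q_m,k_n)\in(0,1].
\end{equation*}
Hence $\beta_m=\kappa\big/\bigl(1+\sum_{n}\Phi_\tau(q_m,k_n)\bigr)$. The denominator lies in $(1,N+1]$, so $\kappa(N+1)^{-1}\le\beta_m\le\kappa$; the upper bound is in fact strict, which is fine.

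For $\alpha_m$, note that the normalized weights $w_{m,n}/(w_{m,0}+\sum_j w_{m,j})$, $n=1,\dots,N$, are nonnegative and sum to at most $1$. The remaining mass sits on the extra key, whose first block of value entries is zero. Therefore, componentwise, $\alpha_m$ is a sub-convex combination of the $v_n$. The triangle inequality then gives $\|\alpha_m\|_\infty\le\max_n\|v_n\|_\infty$, and this holds for arbitrary signed $v_n$.

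Finally, substituting $\kappa=\sqrt{N+1}$ gives $\kappa(N+1)^{-1}=\kappa^{-1}$, so $\beta_m\in[\kappa^{-1},\kappa]$. The only real obstacle is pinning down the exact form of the augmented inputs $\tilde q,\tilde k,\tilde v$ in Algorithm~\ref{alg:reweight}; the rest is the one-line completion of the square. I will reconstruct these inputs from \eqref{eq:reweight}, namely appending $-\tfrac12\|k_n\|^2$ and $1$ to the keys, and the matching entries $1$ and $\tfrac12\|q_m\|^2$ to the queries, so that the extra key realizes $w_{m,0}$.
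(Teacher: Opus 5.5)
Your proof is correct and follows the paper's argument: dividing by the auxiliary-key weight $\e^{\frac\tau2\|q_m\|^2}$ gives $\beta_m=\kappa/(1+\sum_{n=1}^N\Phi_\tau(q_m,k_n))$ with $\Phi_\tau\in(0,1]$, and $\alpha_m$ is bounded because it is a (sub-)convex combination of the $v_n$. Your reconstruction of the augmented inputs matches Algorithm~\ref{alg:reweight}, and your remark that the leftover weight falls on the zero block of the auxiliary value is slightly more careful than the paper's phrase ``convex combination of $v$''.
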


Usually the weights are bounded.
Thus, the critical condition is to ensure that $\beta$ does not underflow. The dynamic range of \texttt{fp16} is $[2^{-14}, 2^{16}]$, which restricts us to $\kappa \le 2^{14}$ or  $N\le  268\,435\,455$.
\begin{algorithm}[H]
\caption{(\texttt{reweight}) Gaussian kernel summation via flash attention}
\label{alg:reweight}
\setcounter{algline}{0}
\renewcommand{\arraystretch}{1.15}
\begin{tabularx}{\linewidth}{@{}r X l l@{}}
\textbf{\#} & \textbf{Step} & \textbf{Work} & \textbf{Call} \\
\hline
\algrow \textbf{Input}
$k\in\R^{N\times D}$, $q\in\R^{M\times D}$, $v\in\R^{N\times C}$, $\tau>0$, $\kappa=\sqrt{N{+}1}$
& --
&
\\
\algrow \textbf{Output}
$s_m=\sum_{n=1}^N \Phi_\tau(q_m, k_n)v_n$, $m=1,\dots,M$
& --
&
\\
\algrow
$\tilde q_m \coloneqq [q_m, 1, \frac12\|q_m\|_2^2]\in \R^{D{+}2}$, $m=1,\ldots, M$
& $MD$
& pad
\\
\algrow
$\tilde k_n \coloneqq [k_n,-\frac12\|k_n\|_2^2,0], n=1,\ldots, N$,\quad $\tilde k_0\coloneqq e_{D+2}$
& $ND$
& pad
\\
\algrow
$\tilde v_n \coloneqq [v_n,0], n=1, \ldots, N$,\quad $\tilde v_0\coloneqq \kappa\, e_{C+1}$
& $NC$
& pad
\\
\algrow
$[\alpha, \beta]\coloneqq\att_\tau(\tilde q,\tilde k,\tilde v)$ with $\alpha\in \R^{M\times C}$ and $\beta\in \R^M$
& $MN(D{+}C)$
& \bf \texttt{Flash}
\\
\algrow
$s_m\coloneqq \kappa \alpha _m/\beta_m$, $m=1,\ldots, M$
& $MC$
& mult
\\
\algrow \textbf{return}
$s$
&
&
\\
\end{tabularx}
\end{algorithm}

\begin{remark}[Practical implementation]\label{rem:head}
Algorithm~\ref{alg:reweight} is implemented entirely through the public \texttt{scaled\_dot\_product\_attention} (SDPA) interface: automatic differentiation applies, and the backward pass again runs \texttt{flash} kernels.
Algorithm~\ref{alg:lse_prescale} is algebraically leaner (no auxiliary key, channel or division) but requires the logits, which the PyTorch backends \texttt{flash}, \texttt{cudnn} and \texttt{memory\_efficient} expose.
In practice, we shift $q$ and $k$ by their common mean to avoid numerical overflow.
Moreover, \texttt{flash} requires half precision (\texttt{fp16} or \texttt{bf16}) and a multiple of $8$ for the dimension of queries, keys and values.
\texttt{Flash} is optimized for $D=C\in \{32, 64, 128\}$ and only works up to $D,C\le 256$.
Both implementations therefore run at dimension $D_{\texttt{reweight}} = \lceil\max(D{+}2, C{+}1)\rceil_8$ and $D_{\texttt{prescale}} = \lceil\max(D, C)\rceil_8$, respectively, where $\lceil\cdot\rceil_8$ rounds up to a multiple of $8$.
\end{remark}

\section{Applications}\label{sec:applications}

\paragraph{Maximum mean discrepancy.}
For discrete measures $\mu$ and $\nu$, write their signed difference as
$\sigma \coloneqq \mu-\nu=\sum_{n=1}^N v_n\delta_{k_n}$, with $v\in\R^N$.
Then, their squared maximum mean discrepancy is
\begin{equation}
    \mathrm{MMD}_\tau(\mu,\nu)^2=\iint\Phi_\tau(q,k)\mathrm d\sigma(q)\mathrm d\sigma(k)
  = \sum_{m=1}^N v_m s_m,
  \qquad s_m = \sum_{n=1}^N \Phi_\tau(q_m,k_n)\, v_n.
\end{equation}
Thus, MMD amounts to a kernel sum with $q=k$ and $C=1$, followed by the inner product $v^T s$.

\paragraph{MMD flows and Stein variational gradient descent.}
Particle flows require the gradient of \eqref{eq:ksum} with respect to the query points, namely 
\begin{equation}
    \nabla_{q_m} \sum_{n=1}^N \Phi_\tau(q_m,k_n) v_n
  = \tau\Big(\textstyle\sum_{n=1}^N \Phi_\tau(q_m,k_n) v_n k_n - q_m \sum_{n=1}^N \Phi_\tau(q_m,k_n) v_n\Big),
\end{equation}
which is again a single kernel sum with values $(v_n k_n, v_n)\in\R^{D+1}$.
This is the regime $C= D+1$ and it covers both terms of the SVGD update \cite[Eq.~(8)]{liu2016stein}, whose kernel part has the same form.
Alternatively, plain autograd through Algorithm \ref{alg:reweight} applies.

\section{Numerical results}\label{sec:experiments}

Here, we benchmark different methods for computing \eqref{eq:ksum}, namely a naive \texttt{torch.compile} \citep{ansel2024PyTorch} version (\texttt{PyTorch}), PyKeOps \citep{charlier2021kernel} (\texttt{PyKeOps}) and the two proposed \texttt{flash}-based variants \texttt{reweight} and \texttt{prescale} from Section~\ref{sec:method}.
The inputs are $B$ independent standard Gaussian point clouds scaled by \smash{$D^{-1/2}$} with $M=N$ and $\tau=1$, which are evaluated in parallel.
For reproducibility, we fix the random seed and always perform $3$ warm-up runs.
Then, we call each method $8$ times and compute the mean and standard deviation of the elapsed time, the memory footprint (excluding input tensors), and the relative $L_2$ error against a \texttt{fp64} reference summation.
All timings were measured on an NVIDIA GeForce RTX 5090 with 32 GB using PyKeOps 2.3, PyTorch 2.13 with CUDA 13.0 and cuDNN 9.20.0.
Since \texttt{flash} relies on Tensor cores running in half precision, we report results for \texttt{fp16}.
Appendix~\ref{app:single_precision} briefly discusses the \texttt{fp32} regime.
\begin{figure}[H]
  \centering
  \begin{subfigure}{0.33\linewidth}
    \includegraphics[width=\linewidth]{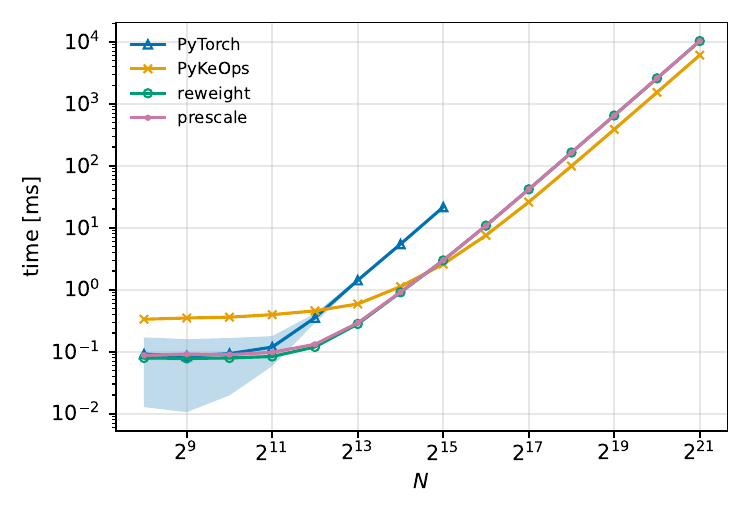}
    \caption{Elapsed time.}\label{fig:sweep_N_a}
  \end{subfigure}\hfill
  \begin{subfigure}{0.33\linewidth}
    \includegraphics[width=\linewidth]{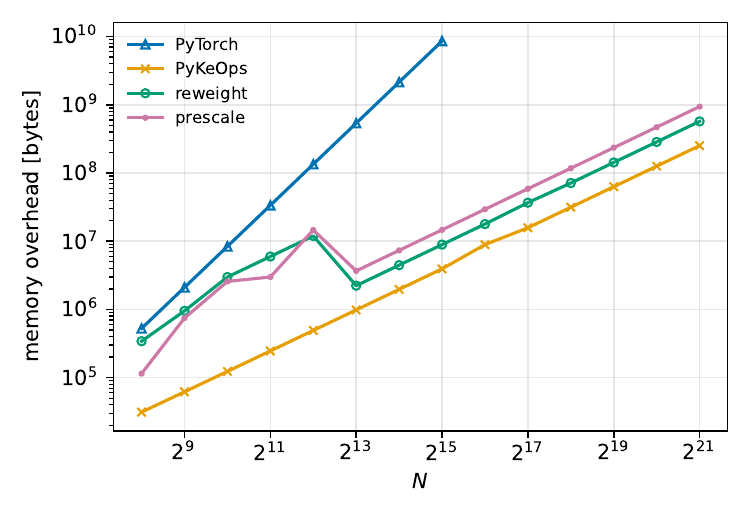}
    \caption{Memory overhead.}\label{fig:sweep_N_b}
  \end{subfigure}\hfill
  \begin{subfigure}{0.33\linewidth}
    \includegraphics[width=\linewidth]{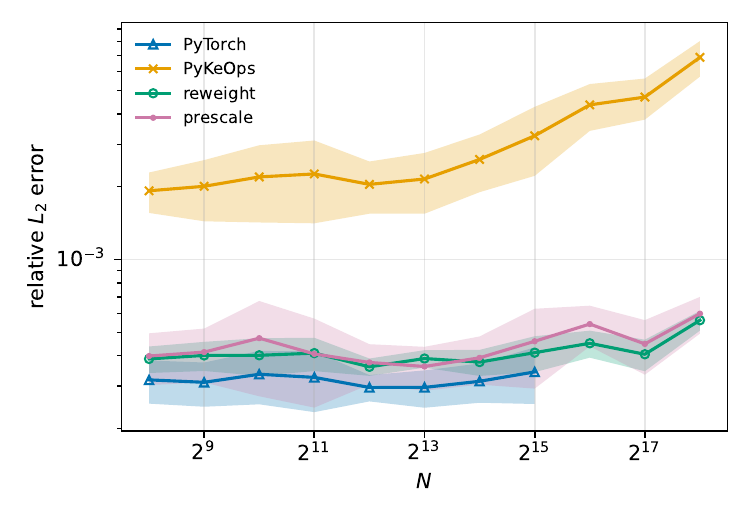}
    \caption{Accuracy. }\label{fig:sweep_N_c}
  \end{subfigure}
  \begin{subfigure}{0.33\linewidth}
    \includegraphics[width=\linewidth]{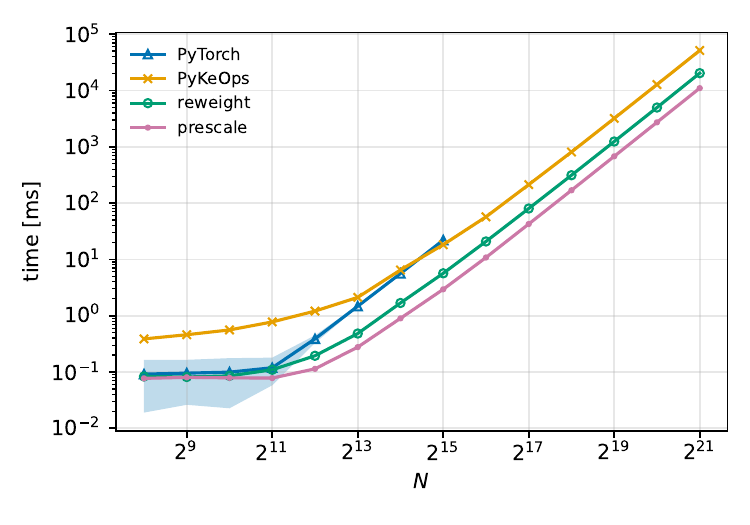}
    \caption{Elapsed time.}\label{fig:sweep_N_d}
  \end{subfigure}\hfill
  \begin{subfigure}{0.33\linewidth}
    \includegraphics[width=\linewidth]{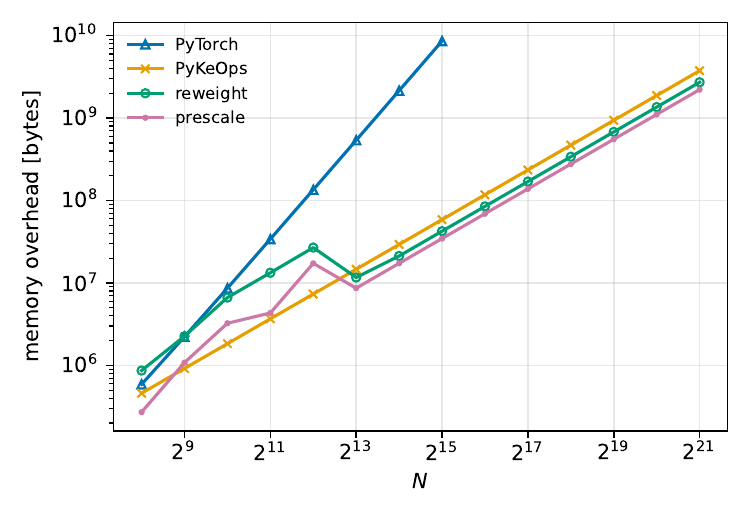}
    \caption{Memory overhead.}\label{fig:sweep_N_e}
  \end{subfigure}\hfill
  \begin{subfigure}{0.33\linewidth}
    \includegraphics[width=\linewidth]{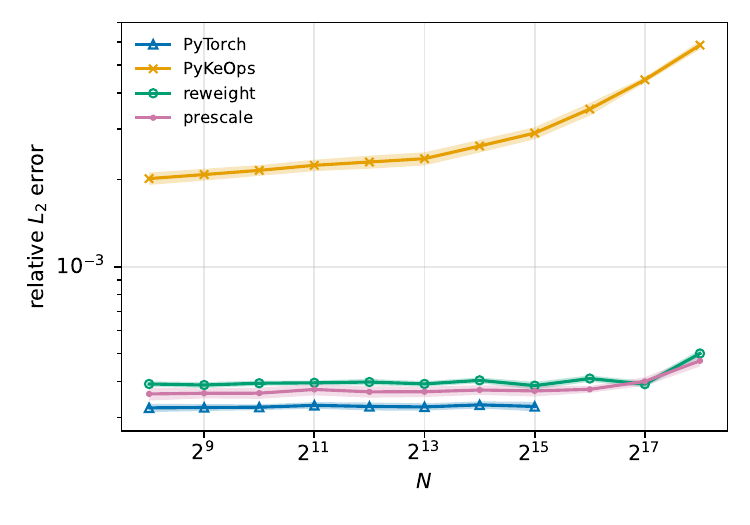}
    \caption{Accuracy.}\label{fig:sweep_N_f}
  \end{subfigure}
  \caption{Sweep over $N$ computing the forward of the Gauss sum \eqref{eq:ksum} for fixed $B=4$  in \texttt{fp16} with backend \texttt{flash}.
  The upper row is $D=3$, $C=1$ and the lower row is $D=32$, $C=32$.}
  \label{fig:sweep_N}
\end{figure}
In Figure~\ref{fig:sweep_N}, we sweep over $N$ with fixed $D$. 
As \texttt{PyTorch} allocates an entire $N\times N$ matrix, this method quickly runs out of memory.
For $D=3$, \texttt{PyKeOps} is marginally faster than \texttt{reweight} and \texttt{prescale} for large $N$.
Further, it uses consistently $3$ times less memory than the \texttt{flash} variants.
However, it has a worse error, especially for large $N$.
For $D=32$, the \texttt{flash} variants outperform \texttt{PyTorch} and \texttt{PyKeOps} in both speed and memory overhead, while maintaining a healthy relative accuracy.
In Figure \ref{fig:sweep_N_backward}, we additionally include the gradient computations for the Gauss sum \eqref{eq:ksum}.
Recall that \texttt{prescale} is not applicable here.
The performance comparison turns out similar to the forward call (see also Figure \ref{fig:sweep_N} bottom row).
\begin{figure}[H]
  \centering
  \begin{subfigure}{0.33\linewidth}
    \includegraphics[width=\linewidth]{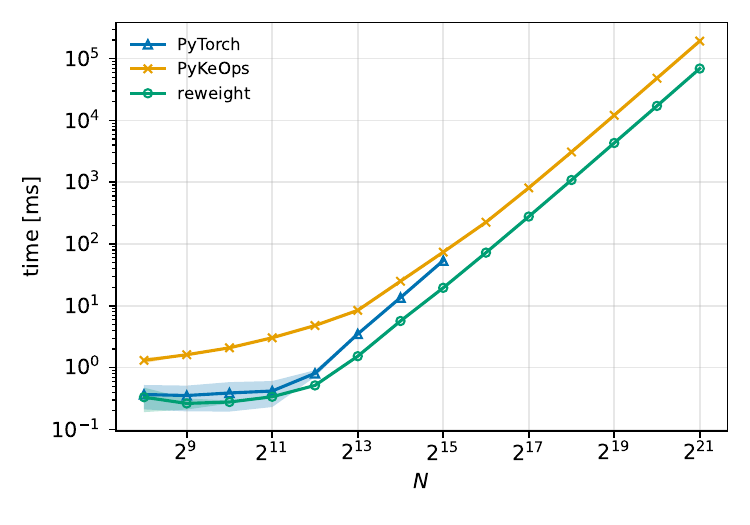}
    \caption{Elapsed time.}\label{fig:sweep_N_backward_a}
  \end{subfigure}\hfill
  \begin{subfigure}{0.33\linewidth}
    \includegraphics[width=\linewidth]{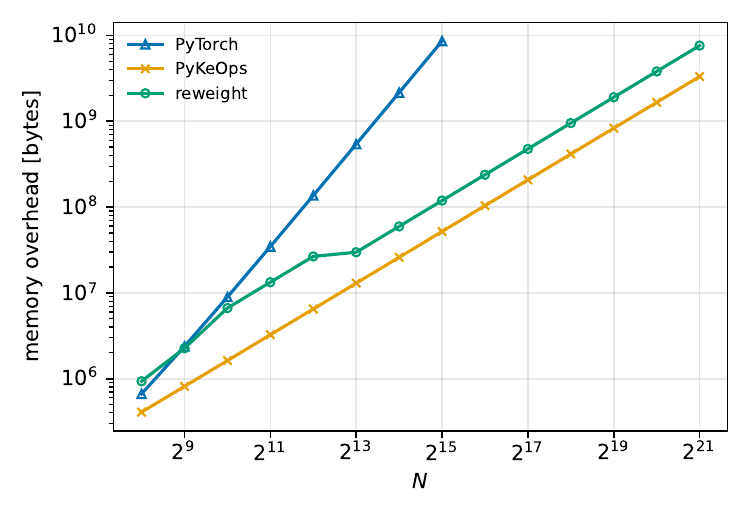}
    \caption{Memory overhead.}\label{fig:sweep_N_backward_b}
  \end{subfigure}\hfill
  \begin{subfigure}{0.33\linewidth}
    \includegraphics[width=\linewidth]{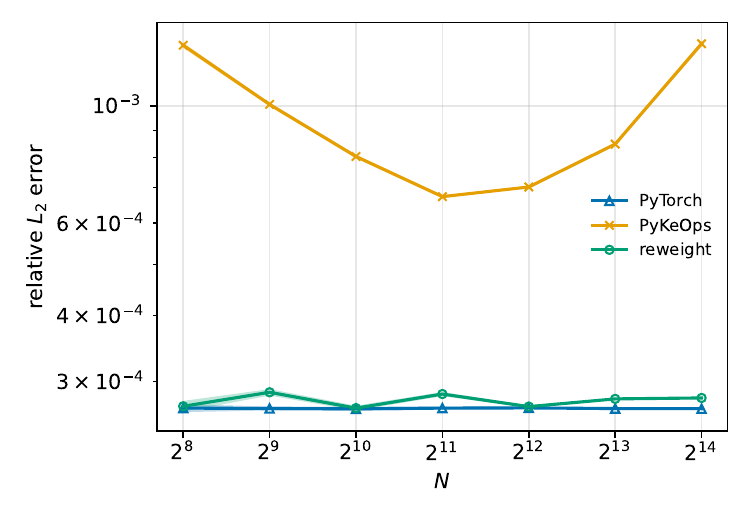}
\caption{Accuracy.}\label{fig:sweep_N_backward_c}
  \end{subfigure}
  \caption{Sweep over $N$ computing the forward and backward of the Gauss sum \eqref{eq:ksum} for fixed $B=4$, $D=32$ and $C=1$ in \texttt{fp16} with backend \texttt{flash}.}
  \label{fig:sweep_N_backward}
\end{figure}
\begin{figure}[H]
  \centering
  \begin{subfigure}{0.33\linewidth}
    \includegraphics[width=\linewidth]{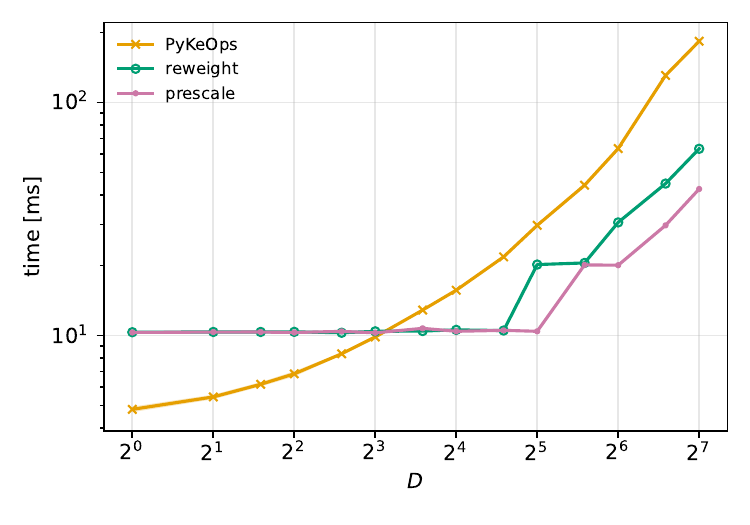}
    \caption{Elapsed time.}\label{fig:sweep_D_a}
  \end{subfigure}\hfill
  \begin{subfigure}{0.33\linewidth}
    \includegraphics[width=\linewidth]{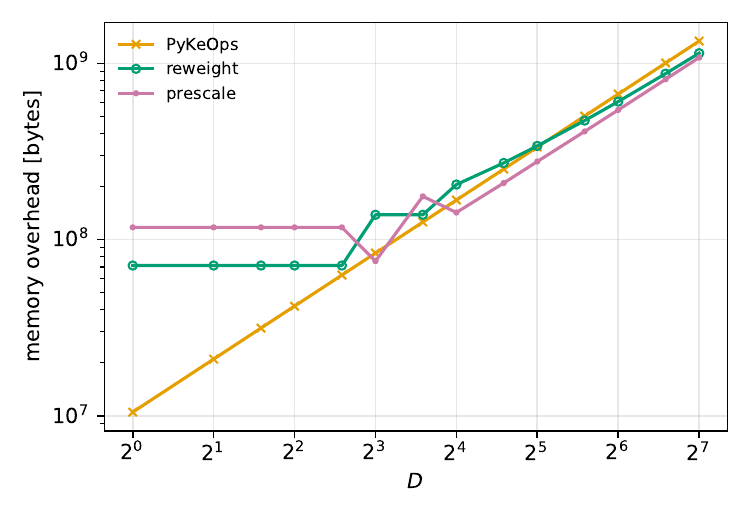}
    \caption{Memory overhead.}\label{fig:sweep_D_b}
  \end{subfigure}\hfill
  \begin{subfigure}{0.33\linewidth}
    \includegraphics[width=\linewidth]{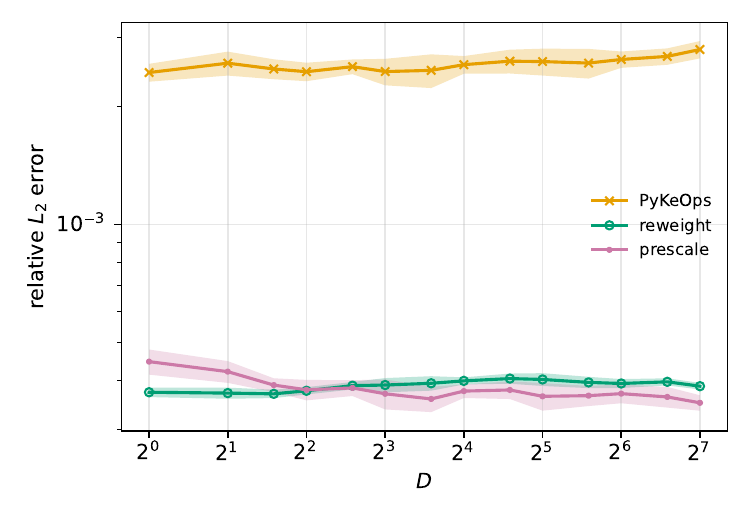}
    \caption{Accuracy.}\label{fig:sweep_D_c}
  \end{subfigure}
  \caption{Sweep over $D$ computing the forward of the Gauss sum \eqref{eq:ksum} for fixed $B=64$, $C=1$ and $N=16384$ in \texttt{fp16} with backend \texttt{flash}.}
  \label{fig:sweep_D}
\end{figure}
In Figure \ref{fig:sweep_D}, the performance across different dimensions $D$ is compared.
\texttt{PyKeOps} consistently has the worst accuracy with an error around $2.7\times 10^{-3}$, while all other methods lie an order of magnitude below around $3.5\times 10^{-4}$.
Its speed degenerates with increasing dimension, while the memory scales linearly and even beats both \texttt{flash} variants for $D\le 8$. Both speed and memory wise \texttt{PyKeOps} is best in the low dimensional regime ($D\le 8$). 
The \texttt{flash} variants are the fastest for $D>8$.
If $D$ is a power of 2 and $C\le D$, \texttt{reweight} is worse compared to \texttt{prescale}, because \texttt{reweight} needs to pad the dimension to $\lceil\max(D{+}2, C{+}1)\rceil_8=D+8$ while \texttt{prescale} only pads to $\lceil\max(D, C)\rceil_8=D$.
This effect becomes visible for $D\ge 16$ in Figure \ref{fig:sweep_D_a}.
For $D\ge 8$, the memory overhead of the \texttt{flash} variants is comparable to the one of \texttt{PyKeOps} with a roughly linear incline.
The accuracy of the \texttt{flash} variants lies around $4.0\times 10^{-4}$ and slightly above \texttt{PyTorch}, which sits at $3.3\times 10^{-4}$.

\section{Conclusion}
Two small input augmentations make \texttt{flash} attention a fast and memory efficient drop-in method for evaluating unnormalized Gaussian kernel sums with signed weights. 
It excels at $\mathtt{fp16}$, which often suits sampling, flows and testing, but not ill-conditioned solvers.
For $D\ge 16$, the \texttt{flash}-based Gaussian kernel sums run $2{-}21$ times faster than \texttt{PyKeOps} forward and $3{-}10$ times faster with gradients, at up to $7$ times lower error and similar linear memory scaling.
Despite the code optimization, both variants have quadratic complexity.
Thus, combining the constant-factor gains shown here with subquadratic approximations \citep{hertrich2024fast,rux2025numericalmethodskernelslicing} is a promising future direction.
\bibliographystyle{abbrvnat}
\bibliography{bib}

@inproceedings{vaswani2017attention,
       AUTHOR = {Vaswani, Ashish and Shazeer, Noam and Parmar, Niki and
                 Uszkoreit, Jakob and Jones, Llion and Gomez, Aidan N. and
                 Kaiser, {\L}ukasz and Polosukhin, Illia},
    BOOKTITLE = {Advances in Neural Information Processing Systems},
    PUBLISHER = {Curran Associates, Inc.},
        TITLE = {Attention Is All You Need},
          URL = {https://proceedings.neurips.cc/paper_files/paper/2017/file/3f5ee243547dee91fbd053c1c4a845aa-Paper.pdf},
       VOLUME = {30},
         YEAR = {2017}
}

@inproceedings{dao2022flashattention,
       AUTHOR = {Dao, Tri and Fu, Daniel Y. and Ermon, Stefano and
                 Rudra, Atri and R\'{e}, Christopher},
    BOOKTITLE = {Advances in Neural Information Processing Systems},
    PUBLISHER = {Curran Associates, Inc.},
        TITLE = {Flash{A}ttention: Fast and Memory-Efficient Exact Attention
                 with {IO}-Awareness},
          URL = {https://proceedings.neurips.cc/paper_files/paper/2022/file/67d57c32e20fd0a7a302cb81d36e40d5-Paper-Conference.pdf},
       VOLUME = {35},
         YEAR = {2022}
}

@inproceedings{dao2023flashattention2,
       AUTHOR = {Dao, Tri},
    BOOKTITLE = {International Conference on Learning Representations},
        PAGES = {35549--35562},
        TITLE = {Flash{A}ttention-2: Faster Attention with Better Parallelism
                 and Work Partitioning},
          URL = {https://proceedings.iclr.cc/paper_files/paper/2024/file/98ed250b203d1ac6b24bbcf263e3d4a7-Paper-Conference.pdf},
         YEAR = {2024}
}

@article{charlier2021kernel,
   AUTHOR = {Charlier, Benjamin and Feydy, Jean and
             Glaun{\`e}s, Joan Alexis and Collin, Fran{\c{c}}ois-David and
             Durif, Ghislain},
    TITLE = {Kernel Operations on the {GPU}, with Autodiff, without Memory
             Overflows},
  JOURNAL = {J. Mach. Learn. Res.},
 FJOURNAL = {Journal of Machine Learning Research},
   VOLUME = {22},
   NUMBER = {74},
    PAGES = {1--6},
     YEAR = {2021},
      URL = {https://jmlr.org/papers/v22/20-275.html}
}

@inproceedings{arbel2019mmd,
       AUTHOR = {Arbel, Michael and Korba, Anna and Salim, Adil and
                 Gretton, Arthur},
    BOOKTITLE = {Advances in Neural Information Processing Systems},
    PUBLISHER = {Curran Associates, Inc.},
        TITLE = {Maximum Mean Discrepancy Gradient Flow},
          URL = {https://proceedings.neurips.cc/paper_files/paper/2019/file/944a5ae3483ed5c1e10bbccb7942a279-Paper.pdf},
       VOLUME = {32},
         YEAR = {2019}
}

@inproceedings{liu2016stein,
       AUTHOR = {Liu, Qiang and Wang, Dilin},
    BOOKTITLE = {Advances in Neural Information Processing Systems},
    PUBLISHER = {Curran Associates, Inc.},
        TITLE = {Stein Variational Gradient Descent: A General Purpose
                 {B}ayesian Inference Algorithm},
          URL = {https://proceedings.neurips.cc/paper_files/paper/2016/file/b3ba8f1bee1238a2f37603d90b58898d-Paper.pdf},
       VOLUME = {29},
         YEAR = {2016}
}

@article{hertrich2024fast,
   AUTHOR = {Hertrich, Johannes},
    TITLE = {Fast Kernel Summation in High Dimensions via Slicing and
             {F}ourier Transforms},
  JOURNAL = {SIAM J. Math. Data Sci.},
 FJOURNAL = {SIAM Journal on Mathematics of Data Science},
   VOLUME = {6},
   NUMBER = {4},
    PAGES = {1109--1137},
     YEAR = {2024},
      DOI = {10.1137/24M1632085}
}

@inproceedings{hertrich2024generative,
       AUTHOR = {Hertrich, Johannes and Wald, Christian and
                 Altekr{\"u}ger, Fabian and Hagemann, Paul},
    BOOKTITLE = {International Conference on Learning Representations},
        PAGES = {20923--20949},
        TITLE = {Generative Sliced {MMD} Flows with {R}iesz Kernels},
          URL = {https://proceedings.iclr.cc/paper_files/paper/2024/file/5b288823575bb29654b0953a251e933b-Paper-Conference.pdf},
         YEAR = {2024}
}

@inproceedings{ansel2024PyTorch,
       AUTHOR = {Ansel, Jason and Yang, Edward and He, Horace and
                 Gimelshein, Natalia and Jain, Animesh and
                 Voznesensky, Michael and Bao, Bin and Bell, Peter and
                 Berard, David and Burovski, Evgeni and Chauhan, Geeta and
                 Chourdia, Anjali and Constable, Will and Desmaison, Alban and
                 DeVito, Zachary and Ellison, Elias and Feng, Will and
                 Gong, Jiong and Gschwind, Michael and Hirsh, Brian and
                 Huang, Sherlock and Kalambarkar, Kshiteej and
                 Kirsch, Laurent and Lazos, Michael and Lezcano, Mario and
                 Liang, Yanbo and Liang, Jason and Lu, Yinghai and
                 Luk, C. K. and Maher, Bert and Pan, Yunjie and
                 Puhrsch, Christian and Reso, Matthias and Saroufim, Mark and
                 Siraichi, Marcos Yukio and Suk, Helen and Zhang, Shunting and
                 Suo, Michael and Tillet, Phil and Zhao, Xu and Wang, Eikan and
                 Zhou, Keren and Zou, Richard and Wang, Xiaodong and
                 Mathews, Ajit and Wen, William and Chanan, Gregory and
                 Wu, Peng and Chintala, Soumith},
    BOOKTITLE = {Proceedings of the 29th {ACM} International Conference on
                 Architectural Support for Programming Languages and
                 Operating Systems},
          DOI = {10.1145/3620665.3640366},
        PAGES = {929--947},
        TITLE = {Py{T}orch 2: Faster Machine Learning Through Dynamic
                 {P}ython Bytecode Transformation and Graph Compilation},
       VOLUME = {2},
         YEAR = {2024}
}

@article{rux2025numericalmethodskernelslicing,
   AUTHOR = {Rux, Nicolaj and Hertrich, Johannes and Neumayer, Sebastian},
    TITLE = {Numerical Methods for Kernel Slicing},
  JOURNAL = {arXiv preprint},
     YEAR = {2025},
      URL = {https://arxiv.org/abs/2510.11478}
}

@article{greengard1991fast,
   AUTHOR = {Greengard, Leslie and Strain, John},
    TITLE = {The Fast {G}auss Transform},
  JOURNAL = {SIAM J. Sci. Stat. Comput.},
 FJOURNAL = {SIAM Journal on Scientific and Statistical Computing},
   VOLUME = {12},
   NUMBER = {1},
    PAGES = {79--94},
     YEAR = {1991},
      DOI = {10.1137/0912004}
}

@article{beatson1992fast,
   AUTHOR = {Beatson, Richard K. and Newsam, Garry N.},
    TITLE = {Fast Evaluation of Radial Basis Functions: {I}},
  JOURNAL = {Comput. Math. Appl.},
 FJOURNAL = {Computers \& Mathematics with Applications},
   VOLUME = {24},
   NUMBER = {12},
    PAGES = {7--19},
     YEAR = {1992},
      DOI = {10.1016/0898-1221(92)90167-G}
}

@inproceedings{yang2004efficient,
       AUTHOR = {Yang, Changjiang and Duraiswami, Ramani and Davis, Larry S.},
    BOOKTITLE = {Advances in Neural Information Processing Systems},
    PUBLISHER = {MIT Press},
        TITLE = {Efficient Kernel Machines Using the Improved Fast
                 {G}auss Transform},
          URL = {https://proceedings.neurips.cc/paper_files/paper/2004/file/85353d3b2f39b9c9b5ee3576578c04b7-Paper.pdf},
       VOLUME = {17},
         YEAR = {2004}
}

@inproceedings{rahimi2007random,
       AUTHOR = {Rahimi, Ali and Recht, Benjamin},
    BOOKTITLE = {Advances in Neural Information Processing Systems},
    PUBLISHER = {Curran Associates, Inc.},
        TITLE = {Random Features for Large-Scale Kernel Machines},
          URL = {https://proceedings.neurips.cc/paper_files/paper/2007/file/013a006f03dbc5392effeb8f18fda755-Paper.pdf},
       VOLUME = {20},
         YEAR = {2007}
}

\section{Appendix}
\subsection{Proof of Proposition \ref{prop:kappa}}
\label{app:proof:kappa}
\begin{proof}
Since $0< \Phi_\tau(q_m, k_n)\le 1$, the estimate $\kappa(N+1)^{-1} \leq \beta_m \le \kappa$ follows from 
\begin{equation}
\beta_m 
= \frac{\kappa
\e^{ \frac{\tau}{2} \|q_m\|^2}
}{
\e^{\frac{\tau}{2} \|q_m\|^2}{+}\sum_{n=1}^N \e^{\tau q_m^\top k_n -\frac{\tau}{2} \|k_n\|^2}
}
= \frac{\kappa }{1+\sum_{n=1}^N \Phi_\tau(q_m, k_n)}.
\end{equation}
The bound on $\alpha$ follows directly as each $\alpha_m$ is a convex combination of $v$ and $\|\cdot\|_\infty$ is convex.
\end{proof}

\subsection{Single precision}
\label{app:single_precision}
For \texttt{fp32}, \texttt{flash} is no longer available as it relies on highly optimized tensor cores. 
Instead, the default backend for softmax attention is \texttt{memory\_efficient}.
Figure \ref{fig:sweep_D} and Figure \ref{fig:sweep_D_fp32} use the same setup ($B=64$, $C=1$, $N=16384$) but in \texttt{fp16} with \texttt{flash} and \texttt{fp32} with \texttt{memory\_efficient}.
For $D\leq 32$, \texttt{memory\_efficient} in \texttt{fp32} is around $10$ times slower than its \texttt{flash}-\texttt{fp16} counterpart.
In contrast, \texttt{PyKeOps} is optimized for \texttt{fp32}.
Remarkably, its memory overhead remains tiny across dimensions.
While the runtime of \texttt{PyKeOps} does not change much between  \texttt{fp16} and \texttt{fp32}, the softmax based variants become competitive only around $D\ge 32$.
While all methods have a healthy accuracy of around $5\times 10^{-7}$, \texttt{PyKeOps} consistently has the lowest relative $L_2$ error.
Thus, for \texttt{fp32}, \texttt{PyKeOps} remains the state-of-the-art implementation.
\begin{figure}[H]
  \centering
  \begin{subfigure}{0.33\linewidth}
    \includegraphics[width=\linewidth]{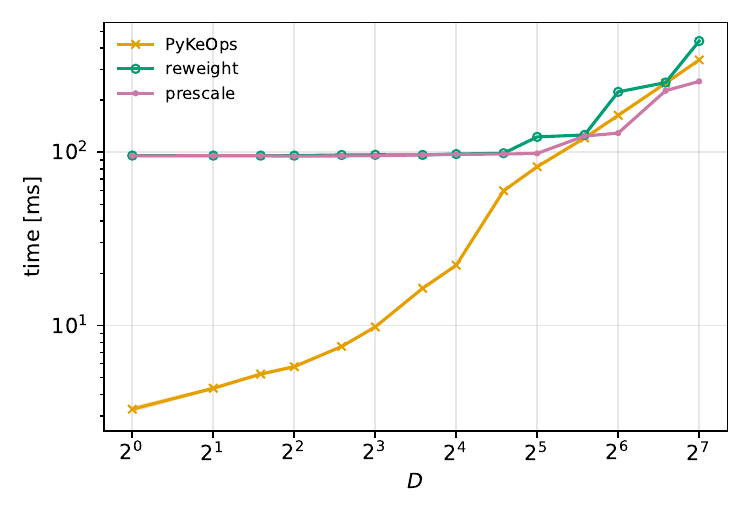}
    \caption{Elapsed time.}\label{fig:sweep_D_a_fp32}
  \end{subfigure}\hfill
  \begin{subfigure}{0.33\linewidth}
    \includegraphics[width=\linewidth]{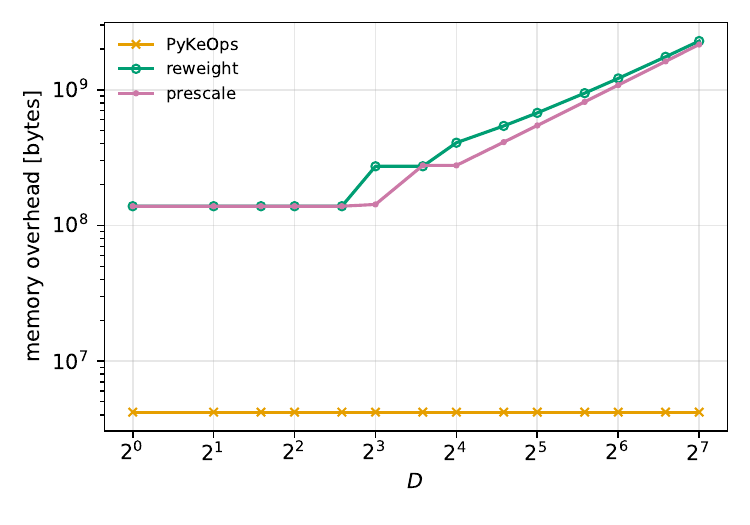}
    \caption{Memory overhead.}\label{fig:sweep_D_b_fp32}
  \end{subfigure}\hfill
  \begin{subfigure}{0.33\linewidth}
    \includegraphics[width=\linewidth]{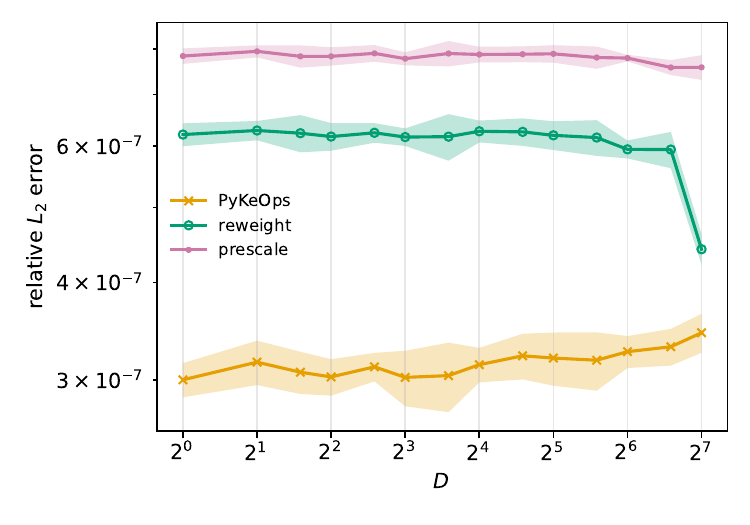}
    \caption{Accuracy.}\label{fig:sweep_D_c_fp32}
  \end{subfigure}
  \caption{Sweep over $D$ computing the forward of the Gauss sum \eqref{eq:ksum} for fixed $B=64$, $C=1$ and $N=16384$ in \texttt{fp32} with backend \texttt{memory\_efficient}.}
  \label{fig:sweep_D_fp32}
\end{figure}

\begin{figure}[H]
  \centering
  \begin{subfigure}{0.33\linewidth}
    \includegraphics[width=\linewidth]{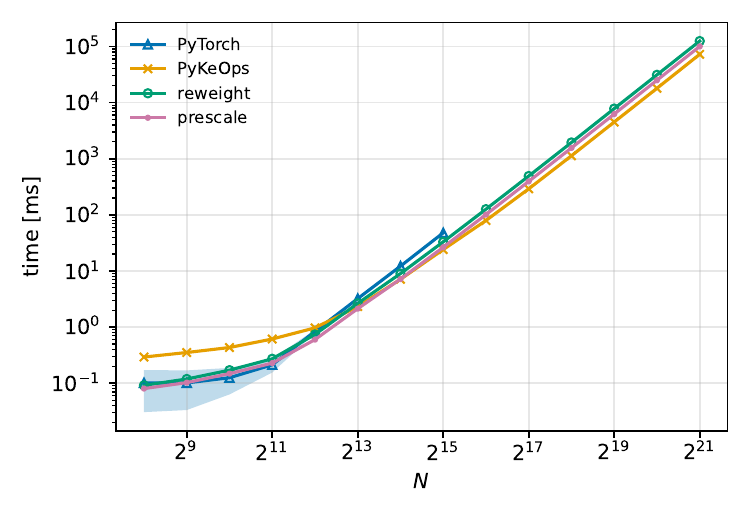}
    \caption{Elapsed time.}\label{fig:sweep_N_a_fp32}
  \end{subfigure}\hfill
  \begin{subfigure}{0.33\linewidth}
    \includegraphics[width=\linewidth]{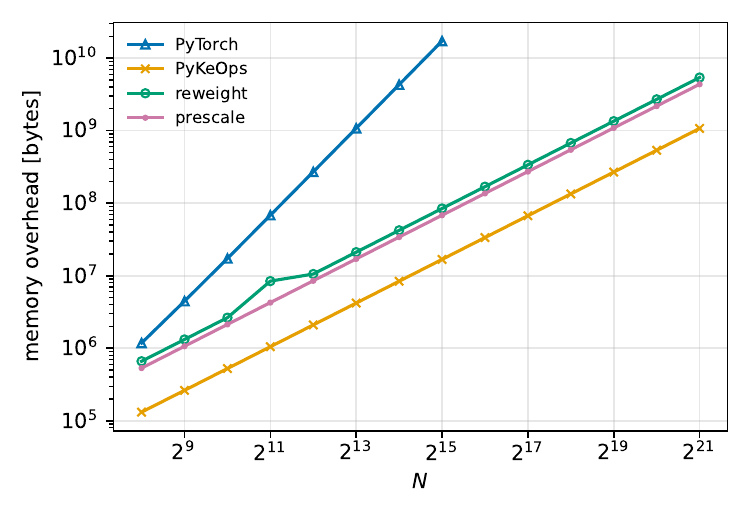}
    \caption{Memory overhead.}\label{fig:sweep_N_b_fp32}
  \end{subfigure}\hfill
  \begin{subfigure}{0.33\linewidth}
    \includegraphics[width=\linewidth]{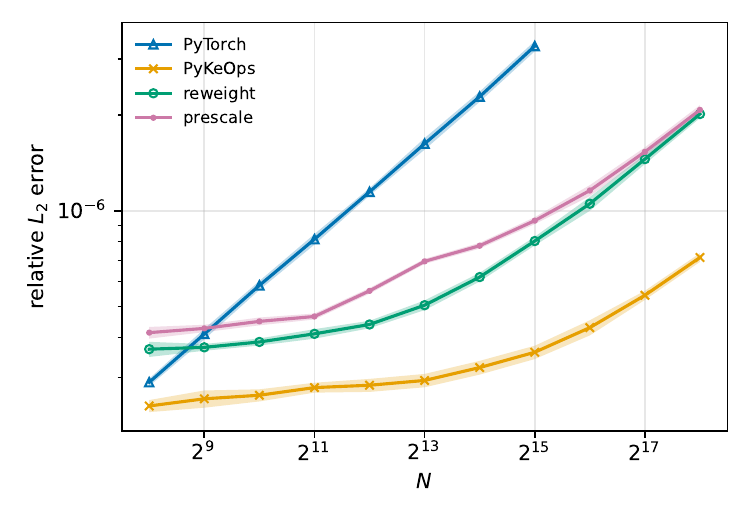}
    \caption{Accuracy.}\label{fig:sweep_N_c_fp32}
  \end{subfigure}
  \caption{Sweep over $N$ computing the forward of the Gauss sum \eqref{eq:ksum} for fixed $B=4$, $D=32$ and $C=32$ in \texttt{fp32} with backend \texttt{memory\_efficient}.}
  \label{fig:sweep_N_fp32}
\end{figure}
\end{document}